\newtheorem{theorem}{\bf Theorem}
\title{\LARGE \bf
Stronger Generalization Guarantees for Robot Learning by \\ Combining Generative Models and Real-World Data
}
\author{Abhinav Agarwal, Sushant Veer, Allen Z. Ren, and Anirudha Majumdar% <-this % stops a space
\thanks{A. Agarwal, A. Z. Ren, and A. Majumdar are with the Department of Mechanical and Aerospace Engineering, Princeton University, Princeton, NJ, 08544.
S. Veer is with NVIDIA Research, Santa Clara, CA 95051, U.S.A. This work was conducted while S. Veer was with Princeton University.
        Emails: {\tt\small \{abhinav.agarwal, allen.ren, ani.majumdar\}@princeton.edu, sveer@nvidia.com}}%
\thanks{The authors were supported by the NSF CAREER award [2044149], the Office of Naval Research [N00014-21-1-2803], and the Toyota Research Institute (TRI). This article solely reflects the opinions and conclusions of its authors and not ONR, NSF, TRI or any other Toyota entity.}
}
\begin{document}

\maketitle
\thispagestyle{empty}
\pagestyle{empty}

%%%%%%%%%%%%%%%%%%%%%%%%%%%%%%%%%%%%%%%%%%%%%%%%%%%%%%%%%%%%%%%%%%%%%%%%%%%%%%%%
\begin{abstract}
We are motivated by the problem of learning policies for robotic systems with rich sensory inputs (e.g., vision) in a manner that allows us to guarantee generalization to environments unseen during training. We provide a framework for providing such \emph{generalization guarantees} by leveraging a finite dataset of real-world environments in combination with a (potentially inaccurate) generative model of environments. The key idea behind our approach is to utilize the generative model in order to \emph{implicitly} specify a \emph{prior} over policies. This prior is updated using the real-world dataset of environments by minimizing an upper bound on the expected cost across novel environments derived via \emph{Probably Approximately Correct (PAC)-Bayes} generalization theory. We demonstrate our approach on two simulated systems with nonlinear/hybrid dynamics and rich sensing modalities: (i) quadrotor navigation with an onboard vision sensor, and (ii) grasping objects using a depth sensor. Comparisons with prior work demonstrate the ability of our approach to obtain stronger generalization guarantees by utilizing generative models. We also present hardware experiments for validating our bounds for the grasping task. 
\end{abstract}

%%%%%%%%%%%%%%%%%%%%%%%%%%%%%%%%%%%%%%%%%%%%%%%%%%%%%%%%%%%%%%%%%%%%%%%%%%%%%%%%
\section{Introduction}

The ability of modern deep learning techniques to process high-dimensional sensory inputs (e.g., vision or depth) provides a promising avenue for training autonomous robotic systems such as drones, robotic manipulators, or autonomous vehicles to operate in complex and real-world environments. However, one of the fundamental challenges with current learning-based approaches for controlling robots is their limited ability to \emph{generalize} beyond the specific set of environments they are trained on \cite{Sunderhauf18}. This lack of generalization is a particularly pressing problem for safety- or performance-critical systems for which one would ideally like to provide \emph{formal guarantees} on generalization to novel environments.

% The datasets used to train these systems can have two sources: (i) real-world dataset, and (ii) hand-engineered generative model. Real-world datasets are, as the name suggests, realistic. However, they are a scarce resource and need to be carefully curated and/ or acquired, which can be painstaking and time-consuming. Generated datasets, on the other hand, can be a source of infinite data but they are usually not as accurate. For instance, consider an arm trying to grasp some mugs. The number of real mugs that can be modelled and used for training will be limited depending on capacity and space. On the other hand, easy to produce digitally generated shapes similar to mugs (like hollow cylinders), can be produced in large numbers, but won't very accurately represent the shape of the mugs we want the arm to grasp. \\

A primary contributing factor to this challenge is the fact that real-world datasets for training robotic systems are often limited in size (e.g., in comparison to large-scale datasets available for training visual recognition models via supervised learning). Such datasets often have to be carefully and painstakingly curated, e.g., by scanning indoor environments using 3D cameras for creating a dataset for visual navigation tasks \cite{armeni_cvpr16, replica19arxiv, xia2020interactive}, or scanning objects and characterizing their physical properties (e.g., inertia, friction, and mass) for creating a dataset for robotic manipulation~\cite{mahler2017dex, calli2017yale, chang2015shapenet}. One way to address this challenge of scarce real-world data is to leverage data from a \emph{generative model} of environments. As an example, consider the problem of manipulating mugs (Fig.~\ref{fig:anchor}); one could hand-craft a generative model that produces shapes that are similar to mugs (e.g., hollow cylinders; Fig. \ref{fig:anchor}) or potentially train a generative model over shapes using a dataset of different objects (e.g., bowls). 

% While relying entirely on real-world datasets can pose the risk of overfitting (since these datasets are generally small due to limited availability), training entirely on hand-engineered generative models would not lead to valid guarantees for real-world applications. In order to leverage the benefits associated with each kind of available data, we propose a method to combine the two sources to make guarantees on safety and performance in novel environments (environments not present in the robot's training dataset).\\

\begin{figure}[t]
\centering
\includegraphics[width=0.49\textwidth]{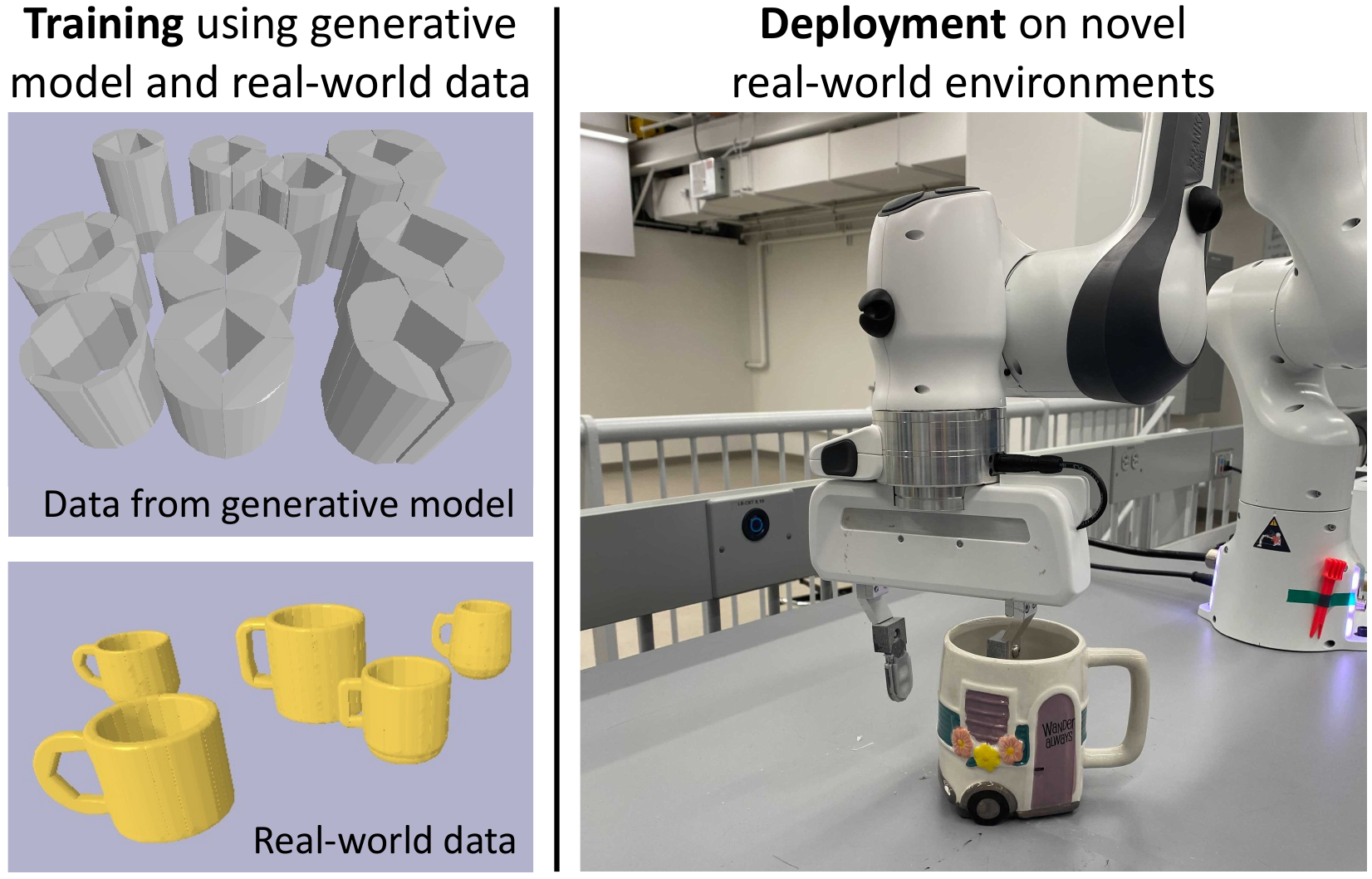}
\caption{A schematic of our overall approach. We provide a framework for providing \emph{generalization guarantees} on novel environments by combining a (potentially inaccurate) generative model of environments (e.g., a distribution that generates hollow cylinders) with a finite dataset of real-world environments (e.g., a datset of mugs). We validate our approach on hardware using the Franka Panda arm, and through multiple simulation experiments.
% train priors over policy space using objects from a generative model (in this case, cylinders). We then use real-world data (in this case, mugs) to find a posterior and obtain a PAC-Bayes bound to provide generalization guarantees. Finally, we test our policy distribution on hardware for the manipulation example.
}
\label{fig:anchor}
\vspace{-5mm}
\end{figure}

The two sources of data outlined above have complementary features: real-world data is scarce but representative, while data from a generative model is plentiful but potentially different from environments the robot will encounter when deployed. Thus, relying entirely on the small real-world dataset can pose the risk of overfitting, while relying entirely on the generative model may cause the robot to overfit to the specific features of this model and prevent generalization to real-world environments. How can we effectively combine these two sources in order to \emph{guarantee} that the robot will generalize to novel real-world environments? 

% \textit{Statement of contributions: } To our knowledge, the framework described in this paper is the first attempt to leverage a combination of a finite real-world dataset and a (potentially inaccurate) hand-engineered generative model in order to provide guarantees on safety and/ or performance in novel environments. We utilize PAC-Bayes generalization theory and provide a way to embed a generative model as a prior. The posterior is then trained using the real-world dataset. The policy space over which the posterior distribution is trained is parameterized by datasets drawn from the generative model. The resulting approach learns a policy with a guaranteed generalization bound. In order to demontrate the ability of our approach to provide strong generalization bounds, we provide two examples which use vision input: navigation of a UAV through a set of obstacles and grasping of mugs by a robotic arm. For both examples, we obtain PAC-Bayes bounds that guarantee successful completion of the task in 80-90 \% of the cases.  \\

\textit{Statement of contributions: } We provide a framework for providing \emph{formal guarantees on generalization} to novel environments for robotic systems with rich sensory inputs by leveraging a combination of \emph{finite real-world data} and a (potentially inaccurate) \emph{generative model} of environments. To our knowledge, the approach presented here is the first to leverage these two sources of data while providing generalization guarantees for robotic systems.
% We propose a framework for leveraging a combination of finite real-world data and a (potentially inaccurate) generative model in order to provide \emph{formal guarantees on generalization} to novel environments for robotic systems with rich sensory inputs. 
The key technical insight behind our approach is to utilize the generative model for specifying a \emph{prior} over control policies. In order to achieve this, we develop a technique for \emph{implicitly} parameterizing policies via datasets of environments. We then train a \emph{posterior distribution} over policies using the real-world dataset; this posterior is trained to minimize an \emph{upper bound} on the expected cost across novel environments derived via \emph{Probably Approximately Correct (PAC)-Bayes} generalization theory. Minimizing the PAC-Bayes bound allows us to automatically trade-off reliance on the real-world dataset and the generative model, while resulting in policies with a guaranteed bound on expected performance in novel environments. We demonstrate our approach on two examples which use vision inputs: (i) navigation of an unmanned aerial vehicle (UAV) through obstacle fields, and (ii) grasping of mugs by a robotic manipulator. For both examples, we obtain PAC-Bayes bounds that guarantee successful completion of the task in 80--95\% of novel environments. Comparisons with prior work demonstrate the ability of our approach to obtain stronger generalization guarantees by utilizing generative models. We also present hardware experiments for validating our bounds on the grasping task. 

\subsection{Related Work}

{\bf Domain randomization and data augmentation.} Domain randomization (DR) is a popular technique for improving the generalization of policies learned via reinforcement learning (RL). DR generates new training environments by randomizing specified dynamics and environmental parameters, e.g., object textures, friction properties, and lighting conditions \cite{tobin2017domain, peng2018sim, tan2018sim, akkaya2019solving, mehta2020active}, or generating new objects for manipulation by combining different shape primitives \cite{tobin2018domain}. Similarly, data augmentation techniques such as random cutout and cropping \cite{kostrikov2020image, laskin2020reinforcement} seek to improve generalization for vision-based RL tasks by performing transformations on the observation space. While these techniques have been empirically shown to improve generalization, they do not provide any guarantees on generalization (which is the focus of our work).

{\bf Generative modeling of environments.} Domain randomization techniques do not necessarily generate realistic environments for training. Consequently, another line of work seeks to address this challenge by generating environments with more realistic structure, e.g., via scene grammars and variational inference \cite{izatt2020generative, qi2018human, kar2019meta}, procedural generation \cite{cobbe2020leveraging}, or evolutionary algorithms \cite{morrison2020egad}. Adversarial techniques have also been developed for generating challenging environments \cite{wang2019adversarial, ren2021distributionally}. Prior work has also explored augmenting real-world training with large amounts of procedurally generated environments via domain adaptation techniques \cite{bousmalis2018using}, transfer learning \cite{kang2019generalization}, or fine-tuning \cite{kulhanek2021visual}. We highlight that none of the methods above provide guarantees on generalization to real-world environments. In this work, we provide a framework based on PAC-Bayes generalization theory in order to combine environments from a generative model with real-world environments and provide generalization guarantees for the resulting policies. Our work is thus complementary to the above techniques and could potentially leverage advances in generative modeling. 

{\bf Generalization theory.} Generalization theory provides a framework for learning hypotheses (in supervised learning) with guaranteed bounds on the true expected loss on new examples drawn from the underlying (but unknown) data-generating distribution, given only a finite number of training examples. Early frameworks include Vapnik-Chervonenkis (VC) theory \cite{Vapnik68} and Rademacher complexity \cite{Shalev14}. However, these methods often provide vacuous generalization bounds for high-dimensional hypothesis spaces (e.g., neural networks). Bounds based on PAC-Bayes generalization theory \cite{Shawe-Taylor97, McAllester99, Seeger02} have recently been shown to provide strong generalization guarantees for neural networks in a variety of supervised learning settings \cite{Dziugiate17, Langford03, Germain09, Bartlett17, Jiang20, Perez-Ortiz20}, and have been significantly extended and improved \cite{Catoni04, Catoni07, McAllester13, Rivasplata19, Thiemann17, Dziugaite18}. PAC-Bayes has also recently been extended to learn policies for robots with guarantees on generalization to novel environments \cite{Majumdar18, Veer20, Ren20, Majumdar21}. In this paper, we build on this work and provide a framework for leveraging generative models as a form of prior knowledge within PAC-Bayes. Comparisons with the approaches presented in \cite{Veer20, Ren20} demonstrate that this leads to stronger generalization guarantees and empirical performance (see Section \ref{sec:examples} for numerical results).

\section{Problem Formulation}\label{sec:prob-form}
{\bf Dynamics, environments, and sensing.} Consider a robotic system with discrete-time dynamics given by:
\begin{equation}
    x_{t+1} = f_E(x_t, u_t),
\end{equation}
where $x_t \in$ $\mathcal{X} \subseteq \mathbb{R}^{n_x}$ is the state of the robot at time-step $t$, $u_t \in \mathcal{U} \subseteq \mathbb{R}^{n_u}$ is the control input, and $E \in \mathcal{E}$ is the environment that the robot is operating in. The term ``environment" is used broadly to represent all external factors which influence the evolution of the state of the robot, e.g., an obstacle field that a UAV has to avoid, external disturbances such as wind gusts, or an object that a robotic manipulator is grasping. The dynamics of the robot may be nonlinear/hybrid. Let $\mathcal{O} \subseteq \mathbb{R}^{n_o}$ denote the space corresponding to the robot's sensor observations (e.g., the space of images for a camera).

% {\bf Sensing.} Let $\mathcal{O} \subseteq \mathbb{R}^{n_o}$ be the space corresponding to the robot's sensor observations (e.g., the space of images for a camera). Then g: $\mathcal{X}$ $\times$ $\mathcal{E}$ $\to$ $\mathcal{O}$ represents the robots extroreceptive sensor that produces observation $o \in \mathcal{O}$. Consider $\mathcal{L}$ as a space over motion primitives (time-varying proprioceptive controllers) which are executed in a \textit{receding horizon manner}. Our aim is to train policy $\pi$: $\mathcal{O}$ $\to$ $\mathcal{L}$. 

% In order to facilitate the training, we define a cost function C($\pi$, $E$). The environment $E$ is able to capture all external factors on which the cost of executing a policy $\pi$ might depend. For instance, $E$ has information about location and size of obstalces, and a collision on executing policy $\pi$ would represent a cost of 1 while no collision would represent a cost of 0. All the assumptions in \cite{Veer20} for the cost function hold in order to be able to use PAC-Bayes theory. \\

{\bf Policies and cost functions.} Let $\pi: \mathcal{O} \rightarrow \mathcal{U}$ be a policy that maps observations (or potentially a history of observations) to actions, and let $\Pi$ denote the space of policies  (e.g., neural networks with a certain architecture). The robot's task is specified via a cost function; we let $C_E(\pi)$ denote the cost incurred by policy $\pi$ when deployed in environment $E$ over a time horizon $T$. As an example in the context of UAV navigation, the cost function can assign 1 if the UAV collides with an obstacle, or 0 if it successfully reaches its goal. We assume that the cost is bounded, and without further loss of generality assume that $C_E(\pi) \in [0,1]$. Importantly, we make no further assumptions on the cost function (e.g., we \emph{do not} assume continuity or Lipschitzness). 

% Similar to previous work in this area \cite{majumdar2020pacbayes}, we assume an underlying \textit{unknown} distribution $\mathcal{D}$ over the space $\mathcal{E}$ of all environments from which the set $S := \{E_1, E_2, ..., E_N\}$ can be drawn i.i.d. for training. For the remainder of the paper, we refer to the environments in $S$ as real environments, since this is analogous to the \textit{real-world datasets} mentioned in the introduction. \\

{\bf Dataset of real-world environments.} We assume that there is an underlying distribution $\mathcal{D}$ from which real-world environments that the robot operates in are drawn (e.g., an underlying distribution over obstacle environments for UAV navigation, or objects for grasping). Importantly, we \emph{do not} assume that we have explicit knowledge of $\mathcal{D}$ or the space $\mathcal{E}$ of real-world environments. Instead, we assume access to a finite dataset $S := \{E_1, E_2, ..., E_N\}$ of $N$ real-world environments drawn independently from $\mathcal{D}$. 

{\bf Generative model.} In addition to the (potentially small) dataset of real-world environments, we assume access to a \emph{generative model} over environments. This generative model takes the form of a distribution $\mathcal{D}_\text{gen}$ over a space $\mathcal{E}_\text{gen}$ of environments. Importantly, $\mathcal{D}_\text{gen} \neq \mathcal{D}$ and $\mathcal{E}_\text{gen} \neq \mathcal{E}$ in general. Indeed, the space $\mathcal{E}_\text{gen}$ will typically be significantly simpler than the space $\mathcal{E}$ of real-world environments. For example, in the context of manipulation (Fig. \ref{fig:anchor}), $\mathcal{E}$ may correspond to the space of all mugs while $\mathcal{E}_\text{gen}$ may correspond to the space of hollow cylinders (described by a small number of geometric and physical parameters).

{\bf Goal.} Our goal is to learn a policy that \emph{provably generalizes} to novel real-world environments drawn from $\mathcal{D}$. In this paper, we will employ a slightly more general formulation where we choose a \emph{distribution} $P$ over policies (instead of choosing a single policy). This allows for the use of PAC-Bayes generalization theory. Our goal is then to tackle the following optimization problem:
\begin{equation}\label{eq:OPT}
    \min_{P \in \mathcal{P}} \ C_\mathcal{D}(P), \ \text{where} \ C_\mathcal{D}(P) := \mathop{\mathbb{E}}_{E \sim \mathcal{D}} \mathop{\mathbb{E}}_{\pi \sim P}[C(\pi; E)].
\end{equation}
The primary challenge in tackling this problem is that the distribution $\mathcal{D}$ is \emph{unknown} to us. Instead, we have access to a finite number of real-world environments and a (potentially inaccurate) generative model. In the next section, we describe how to leverage these two sources of data in order to learn a distribution $P$ over policies with a \emph{guaranteed bound} on the expected cost $C_\mathcal{D}(P)$, i.e., a provable guarantee on generalization to novel environments drawn from $\mathcal{D}$.

\section{Generalization Guarantees with \\ Generative Models}

In this section, we describe how to combine generative models with a finite amount of real data in order to produce strong generalization guarantees via PAC-Bayes theory. % introduce PAC-Bayes generalization bounds and then leverage generative models to provide strong PAC-Bayes bounds with a finite amount of real data.

\subsection{PAC-Bayes Control}
Our objective is to solve the optimization problem \eqref{eq:OPT}. However, the lack of an explicit characterization of $\mathcal{D}$ prohibits us from directly minimizing $C_\mathcal{D}(P)$. PAC-Bayes generalization bounds \cite{McAllester99} provide a high-confidence upper bound on $C_\mathcal{D}(P)$ in terms of the empirical cost on the training environments $S$ that are drawn from $\mathcal{D}$ and a regularizer. As both these terms can be computed, we minimize the PAC-Bayes upper bound in order to indirectly minimize $C_\mathcal{D}(P)$. Additionally, the PAC-Bayes bound serves as a certificate of generalization to novel environments drawn from $\mathcal{D}$.

Let $\Pi:=\{\pi_\theta~|~\theta\in\Theta\subseteq\mathbb{R}^{n_\theta}\}$ denote the space of policies parameterized by the vector $\theta$; as an example, $\theta$ could be the weights and biases of a neural network. For a ``posterior" policy distribution $P$ on $\Pi$ and a real-world dataset $S \coloneqq \{E_{1}, E_{2}, \cdots ,E_{N} \}$ of $N$ environments drawn i.i.d from $\mathcal{D}$, we define the \emph{empirical cost} as the expected cost across the environments in $S$:
\begin{align}
    C_{S}(P) \coloneqq \frac{1}{N} \sum_{E \in S} \mathop{\mathbb{E}}_{\theta \sim P} [C(\pi_\theta, E)].
    \label{eq:emp_cost}
\end{align}

Let $P_0$ be a ``prior" distribution over $\Pi$ which is specified before the training dataset $S$ is observed. The PAC-Bayes theorem below then provides an upper bound on the true expected cost $C_\mathcal{D}(P)$ which holds with high probability.

\begin{theorem}[adapted from \cite{Veer20}]\label{thm:pac-bayes}
For any $\delta\in(0,1)$ and posterior $P$, with probability at least $1-\delta$ over sampled environments $S\sim \mathcal{D}^N$, the following inequality holds:
\begin{align}
~C_{\mathcal{D}}(P) & \leq C_{PAC}(P,P_0) \nonumber \\
&  := \big(\sqrt{C_S(P) + R(P,P_0)} + \sqrt{R(P,P_0)}\big)^2 , \label{eq:quad-pac-bound}
\end{align}
where
% \footnote{The symbol $C_{QPAC}$ is used to identify that this bound follows from the ``quadratic" PAC-Bayes bound \cite{Rivasplata19}.} 
$R(P,P_0)$ is a regularization term defined as:
\begin{equation}\label{eq:R}
R(P,P_0):=\frac{\textrm{KL}(P||P_0) + \log\big(\frac{2\sqrt{N}}{\delta}\big) }{2N} \enspace.
\end{equation} 
\end{theorem}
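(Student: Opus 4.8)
The plan is to obtain \eqref{eq:quad-pac-bound} in two moves: first establish a ``small-$\mathrm{kl}$'' PAC-Bayes inequality in implicit form, then invert it analytically into the stated quadratic expression. For the first move I would apply the PAC-Bayes theorem of \cite{McAllester99} (in the tightened form used in \cite{Veer20}): for the fixed, data-independent prior $P_0$, with probability at least $1-\delta$ over $S\sim\mathcal{D}^N$, \emph{every} posterior $P$ on $\Pi$ simultaneously satisfies
\begin{equation*}
\mathrm{kl}\!\left(C_S(P)\,\big\|\,C_{\mathcal{D}}(P)\right) \;\le\; \frac{\mathrm{KL}(P\|P_0) + \log\!\big(\tfrac{2\sqrt{N}}{\delta}\big)}{N},
\end{equation*}
where $\mathrm{kl}(q\|p):=q\log\tfrac qp+(1-q)\log\tfrac{1-q}{1-p}$ is the binary relative entropy and $C_S(P)$ is the empirical cost \eqref{eq:emp_cost}. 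The two ingredients here are (i) the Donsker--Varadhan change-of-measure inequality, which trades an expectation under $P_0$ for one under $P$ at the price of $\mathrm{KL}(P\|P_0)$, applied to the random variable $N\,\mathrm{kl}(C_S(\pi_\theta)\|C_{\mathcal{D}}(\pi_\theta))$ (here $C_S(\pi_\theta)$ and $C_{\mathcal{D}}(\pi_\theta)$ are the empirical and true costs of the single policy $\pi_\theta$); and (ii) the estimate $\mathbb{E}_{S\sim\mathcal{D}^N}\exp\!\big(N\,\mathrm{kl}(C_S(\pi_\theta)\|C_{\mathcal{D}}(\pi_\theta))\big)\le 2\sqrt N$, valid for any fixed $\pi_\theta$ because $C(\pi_\theta,E)\in[0,1]$, via a standard exponential-moment bound for bounded empirical means. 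This is precisely where the $[0,1]$-boundedness of the cost and the $2\sqrt N$ factor in $R(P,P_0)$ enter; note that no continuity or Lipschitz structure of $C$ is used. The uniformity over all posteriors (including data-dependent ones) is automatic because $P_0$ is fixed in advance.

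For the second move, write $q=C_S(P)$, $p=C_{\mathcal{D}}(P)$, and $c:=\big(\mathrm{KL}(P\|P_0)+\log(2\sqrt N/\delta)\big)/N=2R(P,P_0)$, so the high-probability event reads $\mathrm{kl}(q\|p)\le c$. Since we want an upper bound on $p$, only the regime $p\ge q$ is non-trivial (if $p<q$ then $p<q\le(\sqrt{q+R(P,P_0)}+\sqrt{R(P,P_0)})^2$ and we are done), and there the refined Pinsker inequality $\mathrm{kl}(q\|p)\ge (p-q)^2/(2p)$ gives $(p-q)^2\le 2pc$, hence $p-\sqrt{2c}\,\sqrt p-q\le 0$. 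Reading this as a quadratic inequality in $\sqrt p$ and taking the nonnegative root, $\sqrt p\le \sqrt{c/2}+\sqrt{q+c/2}$, so $p\le(\sqrt{q+c/2}+\sqrt{c/2})^2$. Substituting $c/2=R(P,P_0)$ gives $C_{\mathcal{D}}(P)\le\big(\sqrt{C_S(P)+R(P,P_0)}+\sqrt{R(P,P_0)}\big)^2$, which is exactly \eqref{eq:quad-pac-bound}.

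I expect the main obstacle to be ingredient (ii) of the first move: controlling the exponential moment $\mathbb{E}_{S}\exp(N\,\mathrm{kl}(C_S(\pi_\theta)\|C_{\mathcal{D}}(\pi_\theta)))$ with the clean constant $2\sqrt N$ when the per-environment cost is merely bounded in $[0,1]$ rather than Bernoulli, together with the measure-theoretic bookkeeping (applying Fubini to swap $\mathbb{E}_{\theta\sim P_0}$ and $\mathbb{E}_{S\sim\mathcal{D}^N}$, and Markov's inequality to convert the moment bound into a high-probability statement). By contrast, the change-of-measure step is standard and the quadratic inversion in the second move is elementary algebra. Since the statement is ``adapted from \cite{Veer20}'', I would borrow the $\mathrm{kl}$-form inequality from there and present only the reformulation into \eqref{eq:quad-pac-bound} in detail.
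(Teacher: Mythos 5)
Your proposal is correct, and since the paper offers no proof of this theorem (it defers entirely to the cited source \cite{Veer20}), the relevant comparison is with that source's derivation, which follows exactly your route: the Maurer/Seeger $\mathrm{kl}$-form PAC-Bayes inequality with the $2\sqrt{N}$ exponential-moment constant, followed by inversion via the refined Pinsker bound $\mathrm{kl}(q\|p)\ge (p-q)^2/(2p)$ and the quadratic-in-$\sqrt{p}$ root formula. Your algebra checks out (including the identification $c/2=R(P,P_0)$ and the trivial handling of the $p<q$ case), so this is essentially the same approach, not a new one.
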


It is challenging to specify good priors $P_0$ on the policy space $\Pi$ in general (e.g., specifying a prior on neural network weights); our previous approaches resorted to techniques such as data splitting \cite{Veer20} and imitation learning \cite{Ren20} to obtain priors. On the other hand, generative models offer an intuitive approach for embedding prior domain knowledge in learning \cite{Sunderhauf18,izatt2020generative,qi2018human,kar2019meta,cobbe2020leveraging}. Motivated by this, we will leverage generative models (based on inductive bias or other data) as priors for the PAC-Bayes theorem.

\subsection{Policy Parameterization With Datasets}
\label{subsec:policy-param}

The posterior $P$ and the prior $P_0$ distributions in Theorem~\ref{thm:pac-bayes} are on the space $\Pi$ of policies. Our key idea for leveraging generative models to provide generalization guarantees is to provide an approach for \emph{implicitly} parameterizing policies $\pi_\theta$ via synthetic datasets drawn from the generative model. This parameterization is then used in Theorem~\ref{thm:pac-bayes} such that the PAC-Bayes bound is specified in terms of the posterior $Q$ and the prior $Q_0$ on the space $\mathcal{E}_\text{gen}$ of synthetic environments. Let $\hat{S}$ be a synthetic (i.e., generated) dataset of cardinality $l$ and let $L:\Pi\times\mathcal{E}_\text{gen}^l\to [0,\infty)$ be a loss function; e.g., $L$ can be the average cost of deploying a policy $\pi_\theta$ in environments in $\hat{S}$. Then, let $A:\mathcal{E}_\text{gen}^l\to\Theta$ be an arbitrary \emph{deterministic algorithm} for (approximately) solving the optimization problem: 
\begin{align}\label{eq:parameterization-obj}
    \arg\inf_{\theta\in\Theta} L(\pi_\theta, \hat{S}) \enspace.
\end{align}
Any such algorithm then provides a way to parameterize policies $\pi_{A(\hat{S})}$ implicitly via datasets $\hat{S}$. 
% encodes the policy $\pi_{A(\hat{S}}$ in a dataset $\hat{S}$. 
We note that we do not impose any additional conditions on $A$ (e.g., $A$ need not solve \eqref{eq:parameterization-obj} to global/local optimality). Moreover, although we require $A$ to be deterministic, we can use stochastic optimization approaches --- such as stochastic gradient descent --- by fixing a random seed (this ensures deterministic outputs for a given input). 
% A fixed random seed would ensure that the same input always produces the same output resulting in $A$ being deterministic 
The algorithm $A$ gives rise to a push-forward measure for distributions from the synthetic environment space $\mathcal{E}_\text{gen}$ to the policy space $\Pi$. We overload the notation to express the push-forward distribution on the policy space as $A(Q)$.

% For a given dataset $\hat{S}$, we want to encode the policy $\pi_\theta$ which minimizes the 

% Towards that objective, we first define $h:\mathcal{E}_{\rm gen}^l\to \Pi$ which maps a synthetic dataset $\hat{S}:=\{E_{\mathrm{gen},1}, \cdots, E_{\mathrm{gen},l}\}$ of cardinality $l$ to a policy as follows:
% where $L$ is a loss function. This map will act as a bridge between the space of policies and the space of synthetic data. 

\subsection{PAC-Bayes Bounds With Generative Models}
\label{subsec:pac-bayes-gen}

In order to provide PAC-Bayes bounds using generative models, we encode the posterior $P$ and the prior $P_0$ on the policy space via posterior $Q$ and prior $Q_0$ generative models as follows: $P = A(Q)$, and $P_0 = A(Q_0)$.
% Further, with an abuse of notation, we express the true cost $C_\mathcal{D}$ and the empirical cost $C_S$ for policy distributions encoded by generative models as $C_\mathcal{D}(Q)$ and $C_S(Q)$ instead of $C_\mathcal{D}(A(Q))$ and $C_S(A(Q)$, respectively. 
We are now ready to present the PAC-Bayes bound with generative models.
\begin{theorem}\label{thm:pac-bayes-gen}
Let $A$ be a deterministic algorithm as defined above. For any $\delta\in(0,1)$ and posterior generative model $Q$ on $\mathcal{E}_\text{gen}$, with probability at least $1-\delta$ over sampled real-world environments $S\sim \mathcal{D}^N$, the following holds:
\begin{align}
~C_{\mathcal{D}}(A(Q)) & \leq C_{PAC}(Q,Q_0) \nonumber \\
&  := \big(\sqrt{C_S(A(Q)) + R(Q,Q_0)} + \sqrt{R(Q,Q_0)}\big)^2 , \label{eq:gen-quad-pac-bound}
\end{align}
where
\begin{align}\label{eq:data-gen-pac-bayes-emp}
    C_S(A(Q)) := \frac{1}{N} \sum_{E \in S} \mathop{\mathbb{E}}_{\hat{S} \sim Q} [C(\pi_{A(\hat{S})}, E)]
\end{align}
and $R(Q,Q_0)$ is the same as \eqref{eq:R}.
\end{theorem}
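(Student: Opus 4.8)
The plan is to obtain Theorem~\ref{thm:pac-bayes-gen} as a direct corollary of Theorem~\ref{thm:pac-bayes}, by instantiating the latter with the induced policy distributions $P = A(Q)$ (posterior) and $P_0 = A(Q_0)$ (prior), and then controlling each of the two ingredients of $C_{PAC}$ --- the empirical cost and the regularizer $R$ --- in terms of the corresponding quantities defined on the synthetic-environment space $\mathcal{E}_\text{gen}$.

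First I would check that $P_0 = A(Q_0)$ is a legitimate PAC-Bayes prior: both the generative prior $Q_0$ and the deterministic algorithm $A$ are fixed before the real-world dataset $S$ is drawn, so the push-forward $A(Q_0)$ is a distribution on $\Pi$ that is independent of $S$. Hence Theorem~\ref{thm:pac-bayes} applies verbatim with this prior and posterior $P = A(Q)$, giving, with probability at least $1-\delta$ over $S\sim\mathcal{D}^N$,
\begin{equation}
C_\mathcal{D}(A(Q)) \leq \big(\sqrt{C_S(A(Q)) + R(A(Q),A(Q_0))} + \sqrt{R(A(Q),A(Q_0))}\big)^2 . \nonumber
\end{equation}
The empirical-cost terms then match exactly: by the defining property of the push-forward measure, $\mathbb{E}_{\theta\sim A(Q)}[g(\theta)] = \mathbb{E}_{\hat S\sim Q}[g(A(\hat S))]$ for any integrable $g$, and applying this with $g(\theta) = C(\pi_\theta,E)$ for each $E\in S$ shows that $C_S(A(Q))$ as defined in \eqref{eq:emp_cost} coincides with the expression in \eqref{eq:data-gen-pac-bayes-emp}, so nothing further is needed there.

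The crux is the regularizer. I would invoke the data-processing inequality for KL divergence to get $\mathrm{KL}(A(Q)\,\|\,A(Q_0)) \leq \mathrm{KL}(Q\,\|\,Q_0)$: pushing two distributions forward through a common measurable map (here $A$, with a deterministic map being the degenerate case of a Markov kernel) cannot increase their relative entropy. Since $R(\cdot,\cdot)$ in \eqref{eq:R} is nondecreasing in its KL argument, this yields $R(A(Q),A(Q_0)) \leq R(Q,Q_0)$; and since for fixed $c\geq 0$ the map $r\mapsto(\sqrt{c+r}+\sqrt{r})^2$ is nondecreasing in $r\geq 0$, we may replace $R(A(Q),A(Q_0))$ by $R(Q,Q_0)$ on the right-hand side while only weakening the inequality. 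Chaining the three observations gives $C_\mathcal{D}(A(Q)) \leq C_{PAC}(Q,Q_0)$, which is the claim.

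The main obstacle I anticipate is the technical bookkeeping surrounding the data-processing inequality rather than any conceptual difficulty: one must ensure that $A$ is measurable so that the push-forwards $A(Q)$ and $A(Q_0)$ (and hence $C_{PAC}(A(Q),A(Q_0))$) are well-defined and the DPI is applicable, and that $\theta\mapsto C(\pi_\theta,E)$ is measurable and integrable so the push-forward identity for the empirical cost holds. These are mild regularity conditions consistent with the standing assumptions, but they are the points where care is required.
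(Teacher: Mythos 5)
Your proposal is correct and follows essentially the same route as the paper's proof: instantiate Theorem~\ref{thm:pac-bayes} with $P=A(Q)$ and $P_0=A(Q_0)$, rewrite the empirical cost via the push-forward identity, and apply the data-processing inequality to obtain $R(A(Q),A(Q_0))\leq R(Q,Q_0)$. Your additional remarks on the measurability of $A$, the independence of the prior from $S$, and the monotonicity of $r\mapsto(\sqrt{c+r}+\sqrt{r})^2$ are points the paper leaves implicit, and they make the argument tighter.
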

\begin{proof}
The proof follows by choosing $A(Q)$ as the posterior policy distribution $P$ and $A(Q_0)$ as the prior policy distribution $P_0$ in \eqref{eq:quad-pac-bound}, giving us the following bound:
\begin{align}
C_{\mathcal{D}}(A(Q)) & \leq \big(\sqrt{C_S(A(Q)) + R(A(Q),A(Q_0))} \\
 & \phantom{\leq} + \sqrt{R(A(Q),A(Q_0))}\big)^2 ,\label{eq:inter-1}
\end{align}

The empirical cost can be expressed as:
\begin{align}
    C_{S}(A(Q)) = \frac{1}{N} \sum_{E \in S} \mathop{\mathbb{E}}_{\theta\sim A(Q)} [C(\pi_\theta, E)].
\end{align}
Sampling $\theta$ from the push-forward measure $A(Q)$ is equivalent to sampling $\hat{S}$ from $Q$ and then computing $A(\hat{S})$. Therefore, the empirical cost can be expressed as \eqref{eq:data-gen-pac-bayes-emp}.
% \begin{align}\label{eq:data-gen-pac-bayes-emp}
%     C_{S}(h(Q)) = \frac{1}{N} \sum_{E \in S} \mathop{\mathbb{E}}_{\hat{S} \sim Q} [C(h(\hat{S}), E)].
% \end{align}

Using the data processing inequality \cite{Cover99} we have $KL(A(Q)||A(Q_0)) \leq KL(Q||Q_0)$, which further results in $R(A(Q),A(Q_0)) \leq R(Q,Q_0)$. Using this in \eqref{eq:inter-1} completes the proof.
\end{proof}
% Minimizing the PAC-Bayes generalization bound in Theorem~\ref{thm:pac-bayes-gen} gives us a posterior generative model $Q$ which captures the features of the real environments drawn from $\mathcal{D}$ that are relevant to the robot learning task, as encoded by the cost $C$. In particular, for synthetic data $\hat{S}$ drawn from $Q$, the policy $\pi_{A(\hat{S})}$ parameterized by $\hat{S}$ (which performs well on $\hat{S}$ as $A$ is solving \eqref{eq:parameterization-obj}) performs well on real environments drawn from the true distribution $\mathcal{D}$. Effectively, 
Minimizing $C_{PAC}$ provides us a policy distribution $A(Q)$ with a guaranteed bound on the expected cost $C_\mathcal{D}$ on novel environments, thereby tackling the optimization problem \eqref{eq:OPT}.

\section{Training}
\label{sec:training}

In this section, we present our training pipeline for combining a generative model with real-world data in order to provide strong generalization guarantees. First, we describe the algorithm $A$ used for parameterizing policies through datasets (Sec.~\ref{subsec:policy-param}). Then we provide the algorithm for minimizing the PAC-Bayes upper bound in Theorem~\ref{thm:pac-bayes-gen}.

% In this section we describe how to obtain an optimal distribution $P$ over the policy space $\Tilde{\Pi}$. First, we describe the evolutionary strategies framework used to execute the function h as given in equation 4, which helps us define $\Tilde{\Pi}$. Then we introduce the relative entropy programming method used to find the posterior $P$.

\subsection{Policy Parameterization With Datasets}

As discussed in Sec. \ref{subsec:policy-param}, we require a deterministic algorithm $A$ (that attempts to minimize a loss $L$) in order to implicitly parameterize policies $\pi_{A(\hat{S})}$ via datasets $\hat{S}$. For the results in this paper, we use $L$ as the average cost of deploying a policy $\pi_\theta$ in environments contained in $\hat{S}$:
\begin{align}
    L(\pi_\theta,\hat{S}):= \frac{1}{l} \sum_{E_\text{gen} \in \hat{S}} C(\pi_\theta, E_\text{gen}).
\end{align}

To minimize $L$, we choose the algorithm $A$ to be Evolutionary Strategies (ES) \cite{Wierstra14} with an a priori fixed random seed; fixing the random seed ensures that the algorithm is deterministic. ES belongs to a family of black-box optimizers which train a distribution on the policy space. The choice of ES is driven by our use of black-box simulators through which the gradient of the loss cannot be backpropagated (e.g., due to the loss being non-differentiable or due to the dynamics of the robot being hybrid). Additionally, ES permits a high degree of parallelization, thereby allowing us to effectively exploit clouding computing resources. In the interest of space, further details on our implementation of ES are not provided here and can be found in \cite[Sec.~4.1]{Veer20}.

\subsection{Training a PAC-Bayes Generative Model}

We assume availability of a generative model expressed by a distribution $\mathcal{D}_\text{gen}$ on $\mathcal{E}_\text{gen}$ (ref. Sec. \ref{sec:prob-form}); this model could be hand-specified based on prior knowledge or constructed using other data. Leveraging $\mathcal{D}_\text{gen}$ we first construct a prior generative model and then train a posterior generative model by minimizing the PAC-Bayes bound in Theorem~\ref{thm:pac-bayes-gen}.

As has been shown in \cite{Veer20} and \cite{Majumdar21}, PAC-Bayes minimization takes the form of an \emph{efficiently-solvable convex program} for discrete probability distributions. To exploit this convex formulation (which allows one to optimize the PAC-Bayes bound in a computationally efficient manner), we construct a prior generative model $q_0$ which approximates $\mathcal{D}_\text{gen}$ as a discrete probability distribution as follows:\\
\emph{Let $\mathcal{D}_\text{gen}$ be a generative model which takes the form of a distribution on the synthetic environment space $\mathcal{E}_\text{gen}$, as discussed in Sec.~\ref{sec:prob-form}. Sample $m$ datasets of cardinality $l$ each from $\mathcal{D}_\text{gen}$ to construct the set of datasets $\tilde{S}:=\{\hat{S}_1,\cdots,\hat{S}_m~|~\hat{S}_i\sim \mathcal{D}_\text{gen}^l\}$. The prior generative model $q_0$ is then defined as the uniform distribution on $\hat{S}$.}

To train a posterior generative model $q$ (which is a discrete probability distribution on the set $\tilde{S}$ of synthetic datasets), we minimize the PAC-Bayes upper bound in Theorem~\ref{thm:pac-bayes-gen}. To transform this minimization into a convex program, we first compute a cost vector $C\in\mathbb{R}^m$. Each entry $C_i$ of this vector corresponds to the expected cost of deploying the policy $\pi_{A(\hat{S}_i)}$, parameterized by the synthetic dataset $\hat{S}_i$, in the real-world training dataset $S$. Therefore, the empirical cost $C_S(A(Q))$ can be expressed as $Cq$ (which is linear in the generative model posterior $q$). Leveraging this, we can express the PAC-Bayes bound minimization as follows:
\begin{align}
\min_{q\in\mathbb{R}^m} \quad & \big(\sqrt{Cq + R(q,q_0)} + \sqrt{R(q,q_0)}\big)^2 \label{eq:REP}\\
\textrm{s.t.} \quad & \sum_{i=1}^m q_i = 1, 0\leq q_i \leq 1. \nonumber
\end{align}
Using the epigraph trick, as detailed in \cite{Veer20}, \eqref{eq:REP} can be further transformed to a convex program. In the interest of space, we direct the reader to \cite[Sec.~4.2]{Veer20} for complete details of the algorithm to solve \eqref{eq:REP}. We provide a sketch of our entire training pipeline in Alg.~\ref{alg:train-pipeline}. 

\begin{algorithm}[h]
\caption{Training Pipeline \label{alg:train-pipeline}}
\small
\begin{algorithmic}[1]
	\State \textbf{Input:} Generative model: $\mathcal{D}_\text{gen}$; real-world dataset: $S\sim\mathcal{D}^N$
	\State \textbf{Input:} Number of synthetic datasets: $m$ 
	\State \textbf{Input:} Cardinality of each synthetic dataset: $l$ 
	\State \textbf{Input:} Deterministic algorithm for \eqref{eq:parameterization-obj}: $A$
	\State Sample $\hat{S}_1,\cdots, \hat{S}_m\sim\mathcal{D}_\text{gen}^l$
	\State $q_0 \gets [1/m, \cdots, 1/m] $
	\State $q\gets \text{PAC-Bayes}(S,A,q_0,\{\hat{S}_i\}_{i=1}^m)$ by solving \eqref{eq:REP} \\
	\Return $q$
	\end{algorithmic}
\normalsize
\end{algorithm}

% In order to be able to use \textit{convex optimization} methods, we restricted the policy space to $\Tilde{\Pi}$. We can now express the problem of training a distribution over $\Tilde{\Pi}$ as a relative entropy programming problem. Consider the prior $p_0$ and posterior $p$, both from $\mathcal{\Tilde{P}}$, over $\Tilde{\Pi}$. If the vector $C$ $\in$ $\mathcal{R}^m$, each element of which consists of the cost of running a policy from the finite set on the given environment. The PAC-Bayes expression from equation 7 then becomes: 
% \begin{equation}
%     \min_{p \in \mathcal{R}^m}\sqrt{Cp + R(p, p_0)} + \sqrt{R(p, p_0}
% \end{equation}

% such that $\sum_{i = 1}^m p_i$ = 1, $p_i$ $\in$ (0, 1). We can then set up the REP problem as done in \cite{Veer20}.

\section{Examples}
\label{sec:examples}

We demonstrate the ability of our framework to provide strong generalization guarantees for two robotic systems with nonlinear/hybrid dynamics and rich sensory inputs: a drone navigating obstacle fields using onboard vision, and a manipulator grasping mugs using an external depth camera. % For each task, our approach provides: (i) strong theoretical generalization bounds, and (ii) empirical test results within the theoretical bounds. 
All training is conducted on a \texttt{Lambda Blade} server with \texttt{2x Intel Xeon Gold 5220R} (96 threads), 760 GB of RAM, and 8 \texttt{NVIDIA GeForce RTX 2080}, each with 12 GB memory. We compare our bounds against those in previous works with similar examples. 

\subsection{Vision-based obstacle avoidance with a drone}

% \begin{figure}[t]
%     \centering
%     \includegraphics[width = 0.4\textwidth]{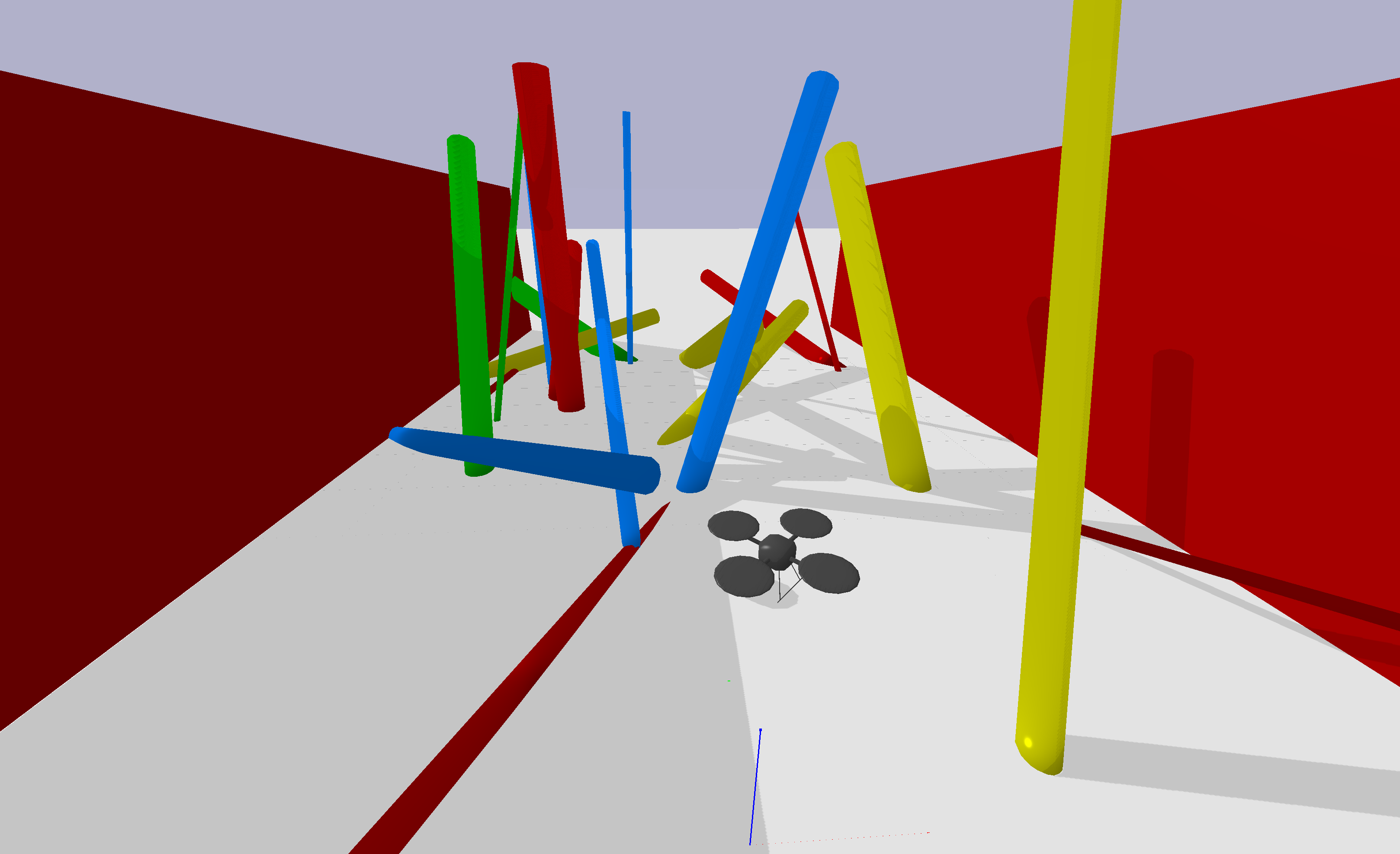}
%     \caption{Scenario for vision-based planning in which a UAV has to navigate across an obstacle field with no prior knowledge of the environment.}
%     \label{fig:UAV}
% \end{figure}

%fig
\begin{figure}[t]
\centering
\subfigure[]
{
\includegraphics[trim={14cm 3cm 14cm 0cm},clip,width=0.23\textwidth]{figures/quadrotor-navigation.png}
\label{fig:UAV-env}
}
% \centering
\hspace{-3mm}
\subfigure[]
{
\includegraphics[width=0.23\textwidth]{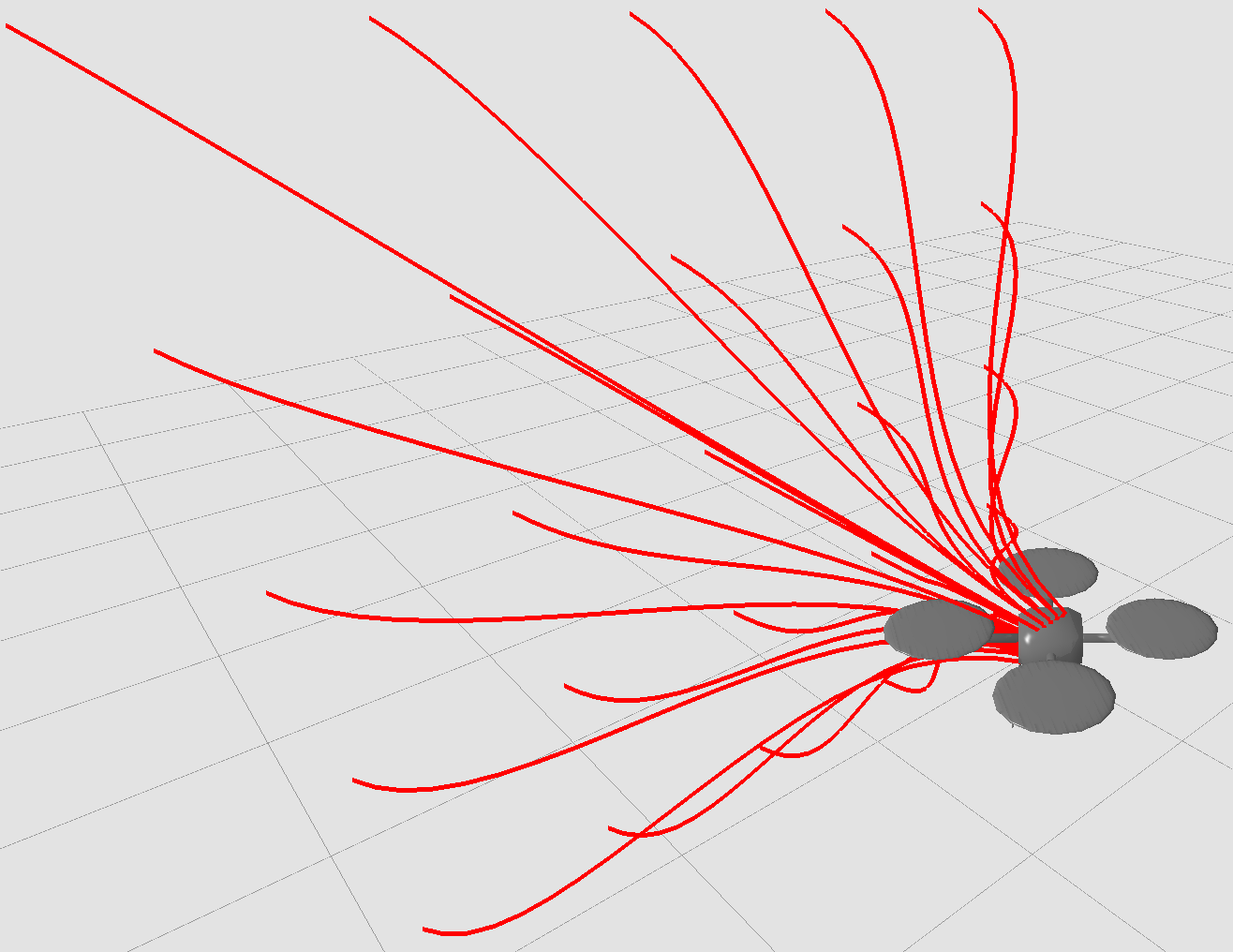}
\label{fig:UAV-prim}
}
\vskip -10pt
\caption{Vision-based navigation with a UAV. \textbf{(a)} Environment with randomly generated obstacles. \textbf{(b)} Primitive library for the UAV. \label{fig:UAV}
}
\vspace{-5mm}
\end{figure}
%fig

\textbf{Overview.} In this example, we train a quadrotor equipped with an onboard depth camera to navigate across obstacle fields. The obstacle course is a tunnel populated by cylindrical obstacles as shown in Fig. \ref{fig:UAV-env}. 
The dynamics and sensor are simulated using \texttt{PyBullet} \cite{pybullet}. % The robot learns to map depth images to a motion primitive output using the policy that we train. 

\textbf{Environments.} The distribution $\mathcal{D}$ over environments samples the radii, locations, and orientations of 23 obstacles in order to generate an environment; the radii are drawn from a uniform distribution over [5cm, 30cm], the locations of the center of the cylinders are drawn from [-5m, 5m]$\times$[0m, 14m], and the orientations are quaternion vectors drawn from a normal distribution. % In order to ensure that generated obstacle environments are traversable in principle, we implement a form of rejection sampling by randomly choosing a motion primitive from our library
% Both distributions on the space $\mathcal{E}$, $\mathcal{D_{\rm gen}}$ and $\mathcal{D}$, sample the radii of the obstacles from a uniform distribution over [5cm, 30cm] and the location of the center of the cylinder from [-5m, 5m]$\times$[0m, 14m]. The orientation is taken as a quaternion drawn from a normal distribution. The number of obstacles for $\mathcal{D}$ remains at 23, whereas for $\mathcal{D_{\rm gen}}$ we vary them from 10 to 30 in steps of 5. While spawning each environment, we implement a form of rejection sampling. We randomly pick one motion primitive from our library and execute it over the spawned environment. If it collides with an obstacle, that obstacle is used before the environment is added to the dataset. This is to make sure that at least a part of the environment is solvable (that is the drone can take at least the first few steps).

\textbf{Generative model.} The generative model $\mathcal{D_{\rm gen}}$ samples radii, locations, and orientations of obstacles from the same distributions as $\mathcal{D}$. However, the number of obstacles in each environment drawn from $\mathcal{D_{\rm gen}}$ is different from the number of obstacles in environments drawn from $\mathcal{D}$. In our experiments, we will study the effects of degrading the quality of the generative model by varying this parameter. 

\textbf{Motion primitives and planning policy.} We pre-compute a library of 25 motion primitives (Fig. \ref{fig:UAV-prim}), each of which is generated by connecting the initial position of the robot to a desired final position by a smooth sigmoidal trajectory. The robot's policy takes a $50 \times 50$ depth image from the onboard camera as input and selects a motion primitive to execute. This policy is applied in a receding-horizon manner (i.e., the robot selects a primitive, executes it, selects another primitive, etc.). The policy is parameterized using a deep neural network ($\sim$14K parameters) and is based on the policy architecture presented in \cite[Sec.~5.1]{Veer20}.

\textbf{Training.} We choose the cost $1 - \frac{k}{K}$ where $k$ is the number of motion primitives successfully executed before colliding with an obstacle and $K$ is the total possible primitive executions; in our example $K = 12$. 
% We choose a total of T = 12 time-steps, i.e., there are 12 instances when the depth map input is taken and motion primitve is executed. The cost assigned to each environment is 1 - $\frac{t}{T}$, where t is the time after which the robot collides with an obstacle. 
We train policies via the pipeline described in Section \ref{sec:training}. We choose $m=50$ datasets in $\tilde{S}$, and each dataset $\hat{S}_i \in \tilde{S}$ has cardinality $l=50$. With 6 GPUs and 48 CPUs, it takes $\sim$ 6-8 hours to train the priors and $\sim$ 200-1000 seconds to train the posterior (depending on the number of real environments used). 

% The priors are trained using the methods described in section IIIB. We train 6 sets of priors for comparison, with number of obstacles as 10, 15, 20, 23, 25, and 30. For each case, the cardinality of $\Tilde{\Pi}$ is 50, i.e. we draw 50 datasets from $\Tilde{D_{\rm gen}}$ and find an optimal policy for each dataset to define $\Tilde{\Pi}$. The size of each dataset drawn is 50, i.e. A = 50. The training is conducted on 

\textbf{Results.} We consider different generative models $\mathcal{D}_\text{gen}$ by varying the number of obstacles $N_\text{O}$ sampled in any generated environment; we vary this parameter in the set $\{10, 15, 20, 23, 25, 30\}$.
Generalization guarantees are obtained using each variation of the generative model. We set $\delta = 0.01$ to have bounds that hold with probability 0.99.

\begin{figure}
    \centering
    \includegraphics[scale = 0.52]{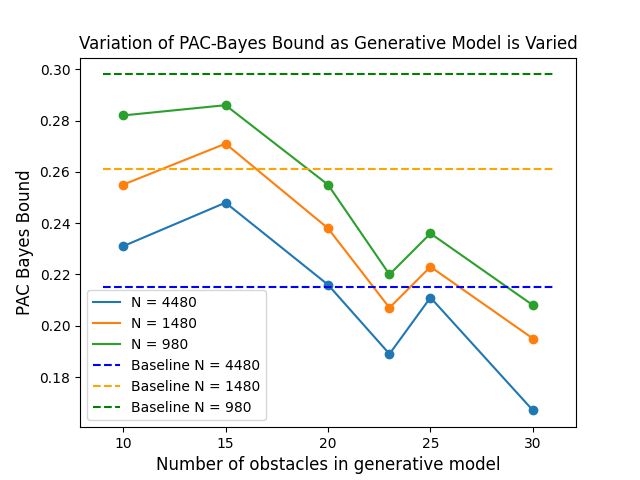}
    \caption{PAC-Bayes bounds for different choices of the generative model (obtained by varying the number $N_\text{O}$ of obstacles sampled in each environment). Bounds generally become stronger as we increase $N_\text{O}$. Comparisons with \cite{Veer20} (dotted lines) demonstrate the benefits of our approach, particularly for smaller values of $N$ (the number of available real-world environments).}
    \label{fig:UAV_Comparison}
    \vspace{-5mm}
\end{figure}

\begin{table}[h]
  \centering
  \renewcommand{\arraystretch}{1.2}
  \begin{adjustbox}{width=1\columnwidth,center}
  \begin{tabular}{|c|c|p{1.5cm}|c|p{1.5cm}|}
    \hline
    \multirow{2}{0.7 cm}{\textbf{Envs (N)}} & \multicolumn{2}{c|}{\textbf{Using generative model (ours)}} & \multicolumn{2}{c|}{\textbf{Approach from \cite{Veer20}}}\\
    % \hline
    % \textbf{Inactive Modes} & \textbf{Description}\\
    \cline{2-5}
    & \textbf{PAC Bound} & \textbf{True Cost (Estimate)} & \textbf{PAC Bound} & \textbf{True Cost (Estimate)}\\
    %\hhline{~--}
    \hline
    980 & 20.92 \% & 13.81 \% & 29.82 \% & 19.7 \% \\ \hline
    1480 & 19.60 \% & 13.81 \% & 26.02 \% & 18.34 \%  \\ \hline
    4480 & 16.76 \% & 13.86 \% & 21.52 \% & 18.43 \%  \\ \hline
  \end{tabular}
  \end{adjustbox}
  \caption{\footnotesize{Comparison of PAC-Bayes bounds with true expected cost on novel environments (estimated via exhaustive sampling). The framework presented here provides both stronger guarantees and empirical performance on novel environments as compared to \cite{Veer20}.}}
  \label{tab:table_1}
\end{table}

Figure \ref{fig:UAV_Comparison} plots the PAC-Bayes bounds on the expected cost for different choices of $N_\text{O}$. For example, when $N_\text{O}=30$ and $N=4480$, the PAC-Bayes bound using our approach is $0.1676$. Thus, we can guarantee that on average the quadrotor will successfully navigate through at least 83.24\% ($100\% - 16.76\%$) of novel real-world environments. 
We also compare these bounds with those provided by the method in \cite{Veer20} (plotted with dotted lines), which splits a given dataset of $N$ real-world environments into two portions; the first portion is used to train a prior over policies and the second portion is used to obtain a posterior distribution over policies by minimizing the PAC-Bayes bound in Theorem \ref{thm:pac-bayes}. We provide our approach with the same number of real-world environments (i.e., $N$) as used in \cite{Veer20} in order to ensure a fair comparison. For each $N$, the bounds generally become stronger as we increase the number of obstacles $N_\text{O}$ sampled by the generative model. 

Figure \ref{fig:UAV_Comparison} demonstrates that the approach presented here is able to produce stronger bounds than the ones provided by \cite{Veer20}, with significant differences when $N_\text{O} = 30$. Interestingly, the benefits of our approach become more apparent when the number $N$ of available real-world environments is small. For example, when $N=980$, the bounds provided by our approach are stronger for all choices of $N_\text{O}$. When $N$ is small, the prior information provided by the generative model becomes important (as one would intuitively expect). 

 % For easy interpretation of the data, consider the second last row with $N = 1480$: the quadrotor is guaranteed to traverse $77.7 \%$ of the previously unseen obstacle courses when trained using a generative model, whereas it is guaranteed to traverse $73.9 \%$ of previously unseen courses when priors are trained using real environments. 

Table \ref{tab:table_1} compares the theoretical generalization bounds obtained for the case when $N_\text{O}=30$ with the true expected cost on novel environments (estimated via exhaustive sampling of novel environments). Results are presented for different numbers $N$ of real-world environments for both our method and the one from \cite{Veer20}. As the table illustrates, our approach results in significantly improved performance on novel environments for all values of $N$.
% The table provides results of the empirical performance of the policy distribution trained using our method. \ani{Need to complete this.}

\vspace{-2mm}

\subsection{Grasping a diverse set of mugs}

\textbf{Overview.} This example aims to train a Franka Panda arm to grasp and lift a mug (Fig. \ref{fig:anchor}). The arm has an overhead camera which provides a 128 $\times$ 128 depth image. The simulation environment for this system is implemented using \texttt{PyBullet} \cite{pybullet}, and we also present hardware results on the Franka arm shown in Fig. \ref{fig:anchor} (right). 

% This image is used to compute an open loop action, which is then implemented as desired positions and orientations of the arm. 

\textbf{Environments.} The real-world environments used for training are drawn from a set of mugs with diverse shapes and sizes collected from the ShapeNet dataset \cite{chang2015shapenet}. The initial x-y position of these mugs is sampled from the uniform distribution over $[0.45~\text{cm},~0.55~\text{cm}] \times [-0.05~\text{cm},~0.05~\text{cm}]$, and yaw orientations are sampled from the uniform distribution over $[-\pi~\text{rad},\pi~\text{rad}]$. All mugs are placed upright. 
% All other cases, including the case in which the gripper palm touches the mug, are assigned a cost of 1.

\textbf{Generative model. }The generative model $\mathcal{D_{\rm gen}}$ comprises of hollow cylinders which are generated using \texttt{trimesh} \cite{trimesh}. The inner radii, outer radii, and height of the cylinders are sampled from uniform distributions. The ratio of the maximum possible outer radius to inner radius is 2, and the height ranges from twice the maximum inner radius to twice the maximum outer radius. The initial location and yaw are sampled from the same distributions as $\mathcal{D}$. 

\textbf{Policy.} The robot's policy is parameterized using a deep neural network (DNN) which takes a depth map of an object and a latent state $z\in\mathbb{R}^{10}$ sampled from a Gaussian distribution as input and outputs a grasp location and orientation. We keep the weights of the DNN fixed and update the distribution on the latent space. Effectively, the latent space acts as the space of policy parameters $\Theta$ and the Gaussian distribution on it is the policy distribution $P$; further details of the policy's architecture can be found in \cite{Ren20}. 

\textbf{Training.} If the arm is able to grasp and lift a mug by 10 cm, we consider the rollout to be successful and assign a cost of 0, otherwise we assign a cost of 1. We follow the pipeline in Alg.~\ref{alg:train-pipeline} for training. We choose m = 50 datasets in $\Tilde{S}$, with each dataset $\Tilde{S}_i$ $\in$ $\Tilde{S}$ having cardinality l = 50. With 80 CPUs, the priors train in $\sim$ 3 hours, and the posterior takes $\sim$ 900 seconds. 

\textbf{Simulation results.} We obtain theoretical generalization guarantees using the generative model described above and compare it with the theoretical guarantees obtained in \cite{Ren20}. We use the same set of 500 mugs from ShapeNet used by \cite{Ren20} as our real dataset in order to train the posterior and obtain the PAC-Bayes bound. Our resulting PAC-Bayes bound (with $\delta = 0.99$) is 0.054. Thus, our policy is guaranteed to have a success rate of at least 94.6 \%, which is higher than the 93 \% guaranteed success rate in \cite{Ren20} (despite using the same real-world dataset of mugs for training). 

\textbf{Hardware results.} The posterior policy distribution trained in simulation is deployed on the hardware setup shown in Fig. \ref{fig:anchor} without additional training (i.e., zero-shot sim-to-real transfer). 10 mugs with diverse shapes are used (Fig.~\ref{fig:mugs}). Among three sets of experiments with different seeds (for sampling the latent $z$), the success rates are 100\% (10/10), 100\% (10/10), and 90\% (9/10). The overall success rate is 96.67\% (29/30) and thus validates the PAC-Bayes bound of $94.6\%$ trained in simulation.

%fig
\begin{figure}[t]
\centering
\includegraphics[trim={0cm 18cm 0cm 18cm},clip,width=0.45\textwidth]{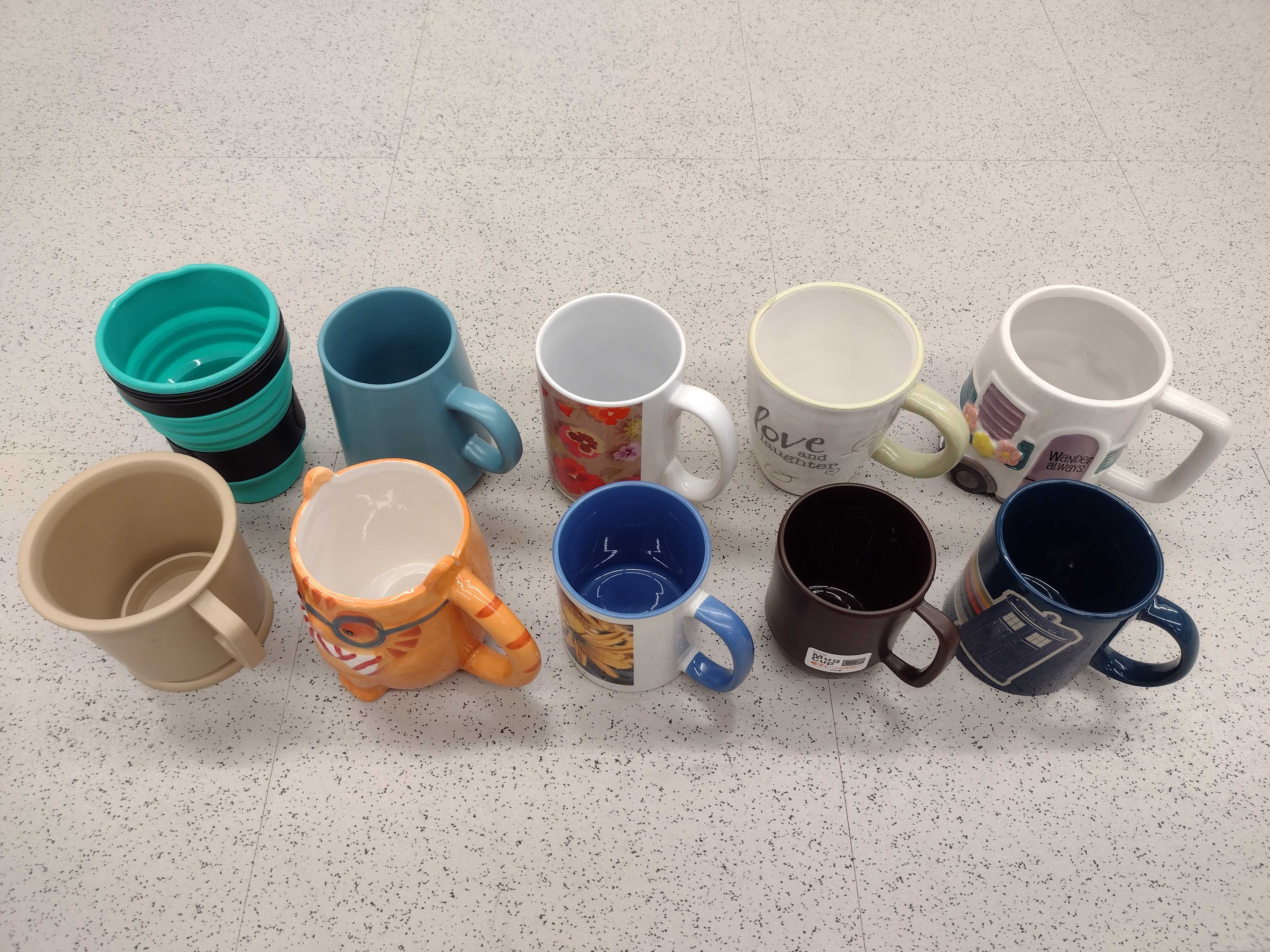}
% \vskip -10pt
\caption{Mugs used for hardware validation of the grasping policy. \label{fig:mugs}
}
\vspace{-7mm}
\end{figure}
%fig

\section{Conclusions and Future Work}

We have presented an approach for learning policies for robotic systems with \emph{guarantees on generalization} to novel environments by leveraging a finite dataset of real-world environments in combination with a (potentially inaccurate) generative model of environments. The key idea behind our approach is to use the generative model in order to implicitly specify a prior over policies, which is then updated using the real-world environments by optimizing generalization bounds derived via PAC-Bayes theory. Our simulation and hardware results demonstrate the ability of our approach to provide strong generalization guarantees for systems with nonlinear/hybrid dynamics and rich sensing modalities, and obtain stronger guarantees and empirical performance than prior methods that do not leverage generative models. 

Exciting directions for future work include (i) obtaining stronger guarantees by going beyond the hand-crafted generative models used here and using state-of-the-art techniques for generative modeling, (ii) directly optimizing a posterior generative model $Q$ in Theorem \ref{thm:pac-bayes-gen} (without performing the finite sampling described in Section \ref{sec:training}), and (iii) implementing the UAV navigation example on a hardware platform. 

% exploring other reinforcement learning algorithms in order to implement the algorithm $A$ that maps a dataset $\hat{S}$ to a policy (

% {\bf Future work.} This work gives rise to several exciting avenues for future work. 

% \addtolength{\textheight}{-12cm}   % This command serves to balance the column lengths
                                  % on the last page of the document manually. It shortens
                                  % the textheight of the last page by a suitable amount.
                                  % This command does not take effect until the next page
                                  % so it should come on the page before the last. Make
                                  % sure that you do not shorten the textheight too much.

%%%%%%%%%%%%%%%%%%%%%%%%%%%%%%%%%%%%%%%%%%%%%%%%%%%%%%%%%%%%%%%%%%%%%%%%%%%%%%%%

%%%%%%%%%%%%%%%%%%%%%%%%%%%%%%%%%%%%%%%%%%%%%%%%%%%%%%%%%%%%%%%%%%%%%%%%%%%%%%%%

%%%%%%%%%%%%%%%%%%%%%%%%%%%%%%%%%%%%%%%%%%%%%%%%%%%%%%%%%%%%%%%%%%%%%%%%%%%%%%%%
% \section*{APPENDIX}

% Appendixes should appear before the acknowledgment.

% \section*{ACKNOWLEDGMENT}

% The preferred spelling of the word ÒacknowledgmentÓ in America is without an ÒeÓ after the ÒgÓ. Avoid the stilted expression, ÒOne of us (R. B. G.) thanks . . .Ó  Instead, try ÒR. B. G. thanksÓ. Put sponsor acknowledgments in the unnumbered footnote on the first page.

%%%%%%%%%%%%%%%%%%%%%%%%%%%%%%%%%%%%%%%%%%%%%%%%%%%%%%%%%%%%%%%%%%%%%%%%%%%%%%%%

% References are important to the reader; therefore, each citation must be complete and correct. If at all possible, references should be commonly available publications.

\bibliographystyle{IEEEtran} % We choose the &quot;plain&quot; reference style
\bibliography{refs} % Entries are in the &quot;refs.bib&quot; file</code></pre>

% \printbibliography

\end{document}